\documentclass[letterpaper]{article}

\usepackage[utf8]{inputenc}
\usepackage{amsmath,amscd,amssymb,amsthm}
\usepackage{verbatim}
\usepackage{thmtools}
\usepackage{thm-restate}

\usepackage[margin=1in]{geometry}

\usepackage[framemethod=TikZ]{mdframed}
\mdfdefinestyle{MyFrame}{%
    linecolor=black,
    outerlinewidth=2pt,
    roundcorner=20pt,
    innerrightmargin=20pt,
    innerleftmargin=20pt,
    backgroundcolor=white}
\usepackage[numbers]{natbib}
\usepackage{hyperref}

\newcommand{\R}{\mathbb{R}}
\newcommand{\E}{\mathop{\mathbb{E}}}
\newcommand{\argmin}{\mathop{\text{argmin}}}

\newcommand{\w}{u}

\newcommand{\ol}{\mathcal{A}}
\newcommand{\bol}{\mathcal{B}}

\usepackage{graphicx}

\usepackage{algorithm}
\usepackage{algorithmic}

\title{Combining Online Learning Guarantees}

\author{
  \textbf{Ashok Cutkosky}\\Google\\\texttt{ashok@cutkosky.com}
}
\date{}
\usepackage{times}

\begin{document}

\maketitle

\begin{abstract}%
  We show how to take any two parameter-free online learning algorithms with different regret guarantees and obtain a single algorithm whose regret is the minimum of the two base algorithms. Our method is embarrassingly simple: just add the iterates. This trick can generate efficient algorithms that adapt to many norms simultaneously, as well as providing diagonal-style algorithms that still maintain dimension-free guarantees. We then proceed to show how a variant on this idea yields a black-box procedure for generating \emph{optimistic} online learning algorithms. This yields the first optimistic regret guarantees in the unconstrained setting and generically increases adaptivity. Further, our optimistic algorithms are guaranteed to do no worse than their non-optimistic counterparts regardless of the quality of the optimistic estimates provided to the algorithm.
\end{abstract}

\section{Online Learning}\label{sec:online}

We consider the classic online learning problem with linear losses \citep{zinkevich2003online, shalev2011online, mcmahan2014survey}, sometimes called online linear optimization. Online learning is a game in which for each of $T$ rounds, the learning algorithm outputs some vector $w_t$ in some convex domain $W$, and then the environment reveals a vector $g_t$ and the algorithm suffers loss $\langle g_t, w_t\rangle$. The objective is to minimize the regret, which is the total loss relative to some benchmark point $\w$:
\[
R_T(\w):=\sum_{t=1}^T  \langle g_t, w_t-\w\rangle
\]
Although this formulation appears to only apply to a simple linear environment, algorithms that guarantee low regret can actually be automatically applied to general stochastic convex optimization problems found throughout machine learning \citep{cesa2004generalization}, and so many of the popular optimization algorithms in use today (e.g. \citep{duchi10adagrad, ross2013normalized}) are in fact online linear optimization algorithms.

Our first goal is to provide a ``meta-algorithm'' that combines online learning algorithms in a black-box manner to obtain an algorithm that achieves the best properties of the individual algorithms. Our technique applies to any algorithm that guarantees $R_T(0)$ is bounded by a constant, notably including the ``parameter-free'' algorithms that obtain regret bounds of the form $R_T(\w)=\tilde O(\|\w\|\sqrt{T})$ without knowledge of $\|\w\|$. There are already a number of such algorithms which guarantee regret bounds adapting to different characteristics of the sequence $g_t$ or comparison point $\w$ \citep{orabona2014simultaneous,foster2015adaptive, orabona2016coin,cutkosky2018black, foster2018online}. Our meta-algorithm frees the user from having to choose which algorithm is best for the task at hand.

Next, we develop a variation of this algorithm-combining technique that yields \emph{optimistic} regret guarantees. In optimistic online learning, the algorithm is provided with a ``hint'' $h_t$ that is some estimate of $g_t$ \emph{before} deciding on the prediction $w_t$. The goal is to use $h_t$ in such a way that the regret is very small when $h_t$ is a good estimate of $g_t$ \citep{hazan2010extracting, rakhlin2013online, chiang2012online, mohri2016accelerating}. A classic optimistic regret bound when $W$ has diameter $D=\sup_{x,y\in W} \|x-y\|$ is:
\[
R_T(\w) \le O\left(D\sqrt{\sum_{t=1}^T \|g_t-h_t\|^2}\right)
\]
Our approach is a reduction that takes an algorithm obtaining regret $R_T(\w)\le B(\w)\sqrt{\sum_{t=1}^T \|g_t\|_\star^2}$ for some arbitrary function $B$ and returns an algorithm obtaining regret
\[
R_T(\w) \le O\left[B(\w)\min\left(\sqrt{\sum_{t=1}^T \|g_t\|^2},\sqrt{\sum_{t=1}^T \|g_t-h_t\|^2}\right)\right]
\]
This improves on prior results in several ways. First, our algorithm is a generic reduction, and so can be applied to make optimistic versions of any new algorithms that may yet be invented. Second, it allows us to construct the first parameter-free optimistic algorithm (e.g. unbounded $W$, $B(\w)=\tilde O(\|\w\|)$). Third, when $W$ is unconstrained, we can improve our results to replace $\sum \|g_t-h_t\|^2$ with $\max(\sum \|g_t-h_t\|^2-\|h_t\|^2, 1)$. Finally, our optimistic algorithm is ``safe'' in the sense that even if the hints $h_t$ are very bad we still do no worse than the original algorithm.

This paper is organized as follows. First, we introduce our technique for combining online learning guarantees (Section \ref{sec:combine}), and provide an efficient algorithm that adapts to many norms at the same time as a simple example of the technique in action. Next, we apply this technique to generate optimistic algorithms in unconstrained domains (Section \ref{sec:optimism}) and see a generic improvement in adaptivity over prior optimistic algorithms while also maintaining good performance in the face of poor-quality $h_t$. We then proceed to adapt our optimistic algorithm to constrained domains (Section \ref{sec:constrained}), matching prior bounds while again being robust to bad $h_t$. Finally, we demonstrate how to take advantage of \emph{multiple} sequences of hints (Section \ref{sec:manyhints}), obtaining an optimistic guarantee that matches the performance on the best sequence of hints in hindsight. We conclude with a simple trick showing how to compete with the best \emph{fixed} hint (Section \ref{sec:fixed}).


\subsection{Definitions and Notation}
Throughout this paper we assume $W$ is a convex subset of a real Hilbert space. Given a norm $\|\cdot\|$, we write $\|\cdot\|_\star$ to indicate the dual norm $\|g\|_\star=\sup_{\|x\|\le 1}\langle g, x\rangle$. We always use $\|\cdot\|$ to indicate the Hilbert space norm unless otherwise stated, so that $\|\cdot\|=\|\cdot\|_\star$ by the standard identification of a Hilbert space with its dual. Given a convex function $f$, we write $x\in\partial f(y)$ to indicate that $x$ is a subgradient of $f$ at $y$. We interchangeably refer to $g_t$ as losses and gradients. We will often assume the $g_t$ are bounded $\|g_t\|\le 1$ for all $t$, which will be stated explicitly in the hypotheses of the relevant results. As usual, $e$ indicates the base of the natural logarithm.

\section{Combining Parameter-Free Algorithms}\label{sec:combine}
In this section we provide our technique for combining incomparable regret guarantees.
Our technique is most effective on algorithms that ensure $R_T(0)\le \epsilon$ for some (usually user-specified) $\epsilon$. There has been much recent work on this style of algorithm \citep{mcmahan2012no, orabona2013dimension, mcmahan2014unconstrained, foster2015adaptive, orabona2017training, cutkosky2018black}, yielding so-called parameter-free algorithms that achieve optimal or near-optimal regret guarantees up to log factors. These works provide various improvements in adaptivity to the norm of $\w$ or the gradients $g_t$. However, there is no one uniformly-dominant adaptive guarantee. As a simple example, under the assumption $\|g_t\|_\star \le 1$ for all $t$, recently \citep{cutkosky2018black} provided algorithms that obtain
\begin{align}
    R_T(\w) &\le \tilde O\left[\epsilon + \frac{\|\w\|}{\sqrt{\lambda}}\max\left(\log\left(\frac{\|\w\|T}{\epsilon}\right),\ \sqrt{\sum_{t=1}^T \|g_t\|_\star^2\log\left(\frac{\|\w\|T}{\epsilon}\right)}\right)\right]\label{eqn:normbound}
\end{align}
for any norm fixed norm $\|\cdot\|$ such that $\|\cdot\|^2$ is $\lambda$-strongly convex with respect to the norm $\|\cdot\|$.

Further, by running a single 1-dimensional copy of this algorithm in each coordinate of a $d$-dimensional problem and rescaling $\epsilon$ to $\epsilon/d$, we can obtain the regret:
\begin{align}
    R_T(\w) &\le \tilde O\left[\epsilon + \sum_{t=1}^T |\w_i|\max\left(\log\left(\frac{d|\w_i|T}{\epsilon}\right),\ \sqrt{\sum_{t=1}^T |g_{t,i}|^2\log\left(\frac{d|\w_i|T}{\epsilon}\right)}\right)\right]\label{eqn:percoordinate}
\end{align}




These regret guarantees are optimal (up to log factors) and also incomparable a priori. Depending on the gradients $g_t$ and the benchmark $\w$, it may be best to use the per-coordinate algorithm or it may be best to use some particular norm. Thus the ``ultimate adaptive algorithm'' would be able to achieve the best of all these bounds \emph{in hindsight}. One approach might be to run all of these optimizers in parallel and use some kind of expert algorithm to choose the best one. This is essentially the approach taken by \citep{foster2017parameter}. However, the regret of such a scheme would likely scale with the \emph{maximum} loss experienced by the best algorithm, which may be extremely pessimistic. Alternatively, one might consider the simpler strategy of simply averaging the predictions of the base algorithms. Unfortunately, now the regret is the \emph{average} of the individual regrets, which is still not good enough. Instead, we propose an even simpler scheme: just add the predictions. Rather surprisingly, this strategy works so long as each base algorithm guarantees $R_T(0)$ sufficiently small. Specifically, we have the following easy Theorem:

\begin{restatable}{Theorem}{thmcombine}\label{thm:combine}
Suppose $W$ is a Hilbert space. Let $\ol$ and $\bol$ be two online linear optimization algorithms that guarantee regret $R^{\ol}_T(\w)$ and $R^{\bol}_T(\w)$ respectively. Let $w^{\ol}_t$ and $w^{\bol}_t$ be their respective predictions on the loss sequence $g_1,\dots, g_T$. Let $w_t=w^{\ol}_t+w^{\bol}_t$. Then we have
\begin{align*}
    R_T(\w)=\sum_{t=1}^T \langle g_t, w_t-\w\rangle \le \inf_{x+y=\w} R^{\ol}_T(x) + R^{\bol}_T(y)
\end{align*}
In particular, if $R^{\ol}_T(0)\le \epsilon$ and $R^{\bol}_T(0)\le \epsilon$, we have
\begin{align*}
    R_T(\w)&\le \epsilon+\min(R^{\ol}_T(\w), R^{\bol}_T(\w))
\end{align*}
\end{restatable}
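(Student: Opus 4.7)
The plan is to exploit the linearity of the loss $\langle g_t, \cdot \rangle$ together with the defining relation $w_t = w^{\ol}_t + w^{\bol}_t$. The whole argument is essentially a one-line algebraic decomposition, so the ``hard'' part is really just noticing that an arbitrary split $\w = x+y$ of the comparator is allowed.

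First, I would fix any decomposition $\w = x + y$ with $x,y \in W$ (the Hilbert space assumption ensures this is unrestricted). For such a split, using linearity of the inner product I would write, for each round $t$,
\[
\langle g_t, w_t - \w\rangle = \langle g_t, w^{\ol}_t - x\rangle + \langle g_t, w^{\bol}_t - y\rangle.
\]
Summing over $t=1,\dots,T$ gives
\[
R_T(\w) = \sum_{t=1}^T \langle g_t, w^{\ol}_t - x\rangle + \sum_{t=1}^T \langle g_t, w^{\bol}_t - y\rangle \le R^{\ol}_T(x) + R^{\bol}_T(y),
\]
where the inequality is just the definition of the regrets of $\ol$ and $\bol$ against the comparators $x$ and $y$, respectively, noting crucially that both algorithms are fed the same gradient sequence $g_1,\dots,g_T$. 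Since the decomposition was arbitrary, taking the infimum over all $x+y=\w$ yields the first claim.

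For the ``in particular'' statement, I would simply instantiate two specific splits in the infimum. Choosing $(x,y) = (\w, 0)$ bounds $R_T(\w) \le R^{\ol}_T(\w) + R^{\bol}_T(0) \le R^{\ol}_T(\w) + \epsilon$, and choosing $(x,y) = (0,\w)$ gives $R_T(\w) \le \epsilon + R^{\bol}_T(\w)$. Taking the better of the two bounds gives $R_T(\w) \le \epsilon + \min(R^{\ol}_T(\w), R^{\bol}_T(\w))$, completing the proof. There is no real obstacle here; the only subtle point to flag is that both base algorithms must observe the \emph{same} gradient sequence $g_t$ (not a sequence depending on only their own iterate), which is consistent with the linear loss setup.
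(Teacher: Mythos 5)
Your proof is correct and follows exactly the same route as the paper's one-line argument: split the comparator as $\w=x+y$, use linearity of the loss to decompose the regret, and then instantiate the two extreme splits for the ``in particular'' claim. Your additional remark that both base algorithms must see the same gradient sequence is a fair point of emphasis but not a new idea.
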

\begin{proof}
The proof is one line:
\begin{align*}
    \sum_{t=1}^T \langle g_t, w_t-\w\rangle=\sum_{t=1}^T \langle g_t, w^{\ol}_t-x\rangle+\langle g_t, w^{\bol}_t-y\rangle\le R^{\ol}_T(x) + R^{\bol}_T(y)
\end{align*}
\end{proof}
With this strategy it is clear that we can combine any $k$ algorithms and obtain only an additive penalty of $(k-1)\epsilon$ over the best of their regret bounds. Since parameter-free algorithms with guarantees like (\ref{eqn:normbound}) and (\ref{eqn:percoordinate}) depend on $\log(1/\epsilon)$, we can replace $\epsilon$ with $\epsilon/k$ to increase the regret by a factor of $\log(k)$ in exchange for guaranteeing only $\epsilon$ regret at 0. Finally, we note that our assumption that $W$ is an entire Hilbert space can usually be removed using the unconstrained-to-constrained reduction of \citep{cutkosky2018black}.

We can gain some more insight into why a result such as Theorem \ref{thm:combine} should be expected to exist by appealing to the equivalence between regret bounds and concentration inequalities outlined by \citep{rakhlin2015equivalence}. Roughly speaking, this result says that a regret bound of $R_T$ implies that sums of mean-zero random variables concentrate about their mean with a radius of roughly $R_T$ - and vice versa. Therefore one should be able to convert a concentration bound into an online learning algorithm, although the conversion may be very computationally taxing. There is already an extremely popular technique for combining concentration inequalities - the union bound - so there should be a corresponding way to combine regret bounds. In this way we can view Theorem \ref{thm:combine} as providing an extremely efficient online learning analog to the union bound.

One immediate application of Theorem \ref{thm:combine} is to combine an algorithm that obtains the bound (\ref{eqn:percoordinate}) with one that obtains the bound (\ref{eqn:normbound}) where $\|\cdot\|=\|\cdot\|_2$. This yields an algorithm that simultaneously enjoys a ``dimension-free'' bound with respect to $\|\cdot\|_2$ while also reaping the benefits of per-coordinate updates when the gradients $g_t$ or comparison point $\w$ are sparse.

A second application is to adapt to many norms simultaneously. For example, in the following Theorem we construct an algorithm that adapts to any $p$-norm for $p\in[1,2]$. The strategy is simple: first, we show that by selecting a discrete grid of $\log(d)$ different $p_i$, we can ensure $\|x\|_p$ is within a constant of $\|x\|_{p_i}$ for some $i$ for any $p\in[1,2]$ (Lemma \ref{thm:pgrid}). Then we observe that $\|\cdot\|_p$ is $p-1$-strongly-convex with respect to itself, so that combining the guarantee (\ref{eqn:normbound}) with our algorithm-combining strategy immediately yields the desired results (Theorem \ref{thm:allpnorm}).

\begin{restatable}{Theorem}{thmallpnorm}\label{thm:allpnorm}
Suppose $g_t\in \R^d$ satisfies $\|g_t\|_2 \le 1$ for all $t$. Then there exists an online algorithm that runs in time $O(d\log(d))$ per update that obtains regret
\begin{align*}
R_T(\w)&\le \tilde O\left(\epsilon\log(d)+\inf_{p\in [1,2]} \frac{\|\w\|_p}{\sqrt{p-1}}\sqrt{\sum_{t=1}^T \|g_t\|_q^2}\right)
\end{align*}
where for any $p$, $q$ is such that $\frac{1}{p}+\frac{1}{q}=1$.
\end{restatable}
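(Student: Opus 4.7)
Following the outline sketched just before the theorem statement, I would pick a grid of roughly $\log d$ exponents $1 < p_0 < p_1 < \cdots < p_k = 2$, run the algorithm achieving the bound (\ref{eqn:normbound}) specialized to the norm $\|\cdot\|_{p_i}$ for each $i$, and then combine all $k$ subroutines by iterating Theorem \ref{thm:combine}.

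\textbf{Grid construction (Lemma \ref{thm:pgrid}).} Set $p_0 = 1 + 1/\log d$ and choose $p_1,\dots,p_k = 2$ so that $1/p_{i-1} - 1/p_i \le c/\log d$ for an absolute constant $c$. Since $1/p_0 - 1/2 \le 1/2$, this requires only $k = O(\log d)$ points. For any $p \in [1,2]$, let $p_i$ be the smallest grid exponent with $p_i \ge p$ (take $i = 0$ if $p \le p_0$); then $0 \le 1/p - 1/p_i \le c/\log d$. Applying the standard inequality $\|x\|_a \le d^{1/a-1/b}\|x\|_b$ for $a \le b$, this yields simultaneously $\|\w\|_{p_i} \le \|\w\|_p$, $\|g_t\|_{q_i} \le d^{1/p - 1/p_i}\|g_t\|_q = O(\|g_t\|_q)$, and $1/\sqrt{p_i - 1} \le 1/\sqrt{p-1}$ (when $p \le p_0$ this last comparison is replaced by $1/\sqrt{p_0 - 1} = \sqrt{\log d} \le 1/\sqrt{p-1}$). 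The hypothesis $\|g_t\|_2 \le 1$ also gives $\|g_t\|_{q_i} \le 1$ for every $i$ since each $q_i \ge 2$, which is what the bound (\ref{eqn:normbound}) requires.

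\textbf{Combination.} Since $\|\cdot\|_{p_i}^2$ is $(p_i - 1)$-strongly convex with respect to $\|\cdot\|_{p_i}$, the algorithm of \citep{cutkosky2018black} applied with this norm and parameter $\epsilon$ produces a subroutine $\ol_i$ satisfying $R^{\ol_i}_T(0) \le \epsilon$ and
\[
R^{\ol_i}_T(\w) \le \tilde O\!\left(\epsilon + \frac{\|\w\|_{p_i}}{\sqrt{p_i - 1}}\sqrt{\sum_{t=1}^T \|g_t\|_{q_i}^2}\right).
\]
Iterating Theorem \ref{thm:combine} combines $\ol_0,\dots,\ol_k$ into a single algorithm whose regret obeys $R_T(\w) \le (k-1)\epsilon + \min_i R^{\ol_i}_T(\w)$: in the inductive step one simply splits the comparator as $\w + 0 + \cdots + 0$ so that only the best subroutine sees $\w$ while each of the remaining $k-1$ only pays its $R^{\ol_j}_T(0) \le \epsilon$. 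With $k = O(\log d)$ this contributes the advertised $O(\epsilon\log d)$ term. For any $p \in (1,2]$, the grid lemma bounds $R^{\ol_i}_T(\w)$ by $\tilde O\!\left(\frac{\|\w\|_p}{\sqrt{p-1}}\sqrt{\sum_t \|g_t\|_q^2}\right)$ at the matched $p_i$; taking the $\min$ over $i$ and then the $\inf$ over $p$ gives the theorem. Each of the $O(\log d)$ subroutines costs $O(d)$ per update, for a total cost of $O(d\log d)$.

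\textbf{Main obstacle.} The delicate point is the regime $p \to 1$, where $\|\cdot\|_p^2$ loses strong convexity and a naive grid would send $1/\sqrt{p-1}$ to infinity. The cutoff $p_0 = 1 + 1/\log d$ is precisely what balances the two competing constraints on the grid: close enough to $1$ that $d^{1 - 1/p_0} = O(1)$ so $\|g_t\|_{q_0}$ stays comparable to $\|g_t\|_q$ even in the extreme case $p = 1$, yet far enough that $1/\sqrt{p_0 - 1} = \sqrt{\log d}$ is absorbed by the $\tilde O$. Once this choice is made, everything else is mechanical composition of the already-established Theorem \ref{thm:combine} with the base bound (\ref{eqn:normbound}).
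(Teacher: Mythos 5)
Your proposal is correct and follows essentially the same route as the paper: the same $O(\log d)$ grid of exponents (the paper parametrizes it on the $q$ side via $1/q_i = 1/q_{i-1} - 1/\log d$, which is equivalent to your $p$-side grid), the same norm-comparison inequality $\|x\|_{q'}\le d^{1/q'-1/q}\|x\|_q$ that underlies Lemma \ref{thm:pgrid}, and the same iterated application of Theorem \ref{thm:combine} at cost $O(\epsilon\log d)$. Your explicit handling of the $p\to 1$ endpoint via the cutoff $p_0 = 1+1/\log d$ is in fact slightly more careful than the paper's presentation, which leaves that boundary implicit.
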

\begin{proof}
Consider $q_0=2$ and $\frac{1}{q_i}=\frac{1}{q_{i-1}}-\frac{1}{\log(d)}$ for all $i\le \log(d)/2$ and $p_i$ given by $\frac{1}{p_i}+\frac{1}{q_i}=1$. Note that there are $O(\log(d))$ different indices $i$. Then by Lemma \ref{thm:pgrid}, for any $p\in[1,2]$, there is some $p_i\ge p$ such that $\|\w\|_{p_i}\le \|\w\|_p$ and $\|g\|_{q_i}\le e\|g\|_q$ for all $g$. Recall that $\|\cdot\|_{p_i}^2$ is $p_i-1$-strongly convex with respect to $\|\cdot\|_{p_i}$. Then consider running one algorithm for each $p_i$ that guarantees regret
\begin{align*}
    R_T(\w)&\le \epsilon + \tilde O\left(\frac{\|\w\|_{p_i}}{\sqrt{p_i-1}}\sqrt{\sum_{t=1}^T \|g_t\|_{q_i}^2}\right)
\end{align*}
where $\frac{1}{p_i}+\frac{1}{q_i}=1$.
Note that this is possible because $\|g_t\|_{q_i}\le \|g_t\|_2\le 1$ for all $t$. Then by combining all $O(\log(d))$ of these algorithms using Theorem \ref{thm:combine} we obtain the stated result.
\end{proof}
Similar bounds have been shown in previous work: \citep{foster2017parameter} achieved a similar bound using an expert algorithm to combine the base algorithms. However, the expert algorithm dominates the runtime and leads to both $O(T)$ time per update, and also to loss of adaptivity to the sum of the squared norms of the gradients. Also, \citep{cutkosky2018black} provides an algorithm that adapts to any sequence of norms simultaneously, but their algorithm requires $O(d^2)$ time per update and incurs an extra $\sqrt{d}$ factor in the regret bound. In contrast, the algorithm presented above is simple, adaptive, and efficient.

This best-of-all-worlds technique has powerful applications beyond simply combining existing regret guarantees. In particular, it enables us to combine algorithms that \emph{do not guarantee sublinear regret} with algorithms that do have reasonable worst-case regret guarantees. This enables us to generate algorithms that perform well all the time (because they do no worse than the algorithm with a worst-case guarantee), but may sometimes perform much better because sometimes the algorithm without a sublinear regret guarantee may ``get lucky'' and perform extremely well. In the following sections, we elaborate on this idea to develop \emph{optimistic} online algorithms.

\section{Optimism}\label{sec:optimism}
Now we turn our best-of-all-worlds strategy into an optimistic online learning algorithm. Specifically, we will provide a black-box reduction that converts any algorithm $\ol$ that obtains regret
\begin{align*}
    R_T(\w) &\le B(\w)\sqrt{\sum_{t=1}^T \|g_t\|^2}
\end{align*}
into an optimistic algorithm obtaining regret
\begin{align*}
    R_T(\w)&\le B(\w)\sqrt{\sum_{t=1}^T \|g_t-h_t\|^2}
\end{align*}
We first tackle the problem in the case that $W$ is an entire Hilbert space (no constraints), and then move to a constrained setting in Section \ref{sec:constrained}.

Our strategy uses a 1-dimensional parameter-free algorithm to take advantage of the hint $h_t$. Intuitively, if $h_t=g_t$ for all $t$, then playing $w_t=-yh_t$ for some sufficiently large positive constant $y$ will yield small regret. We can learn this constant $y$ on-the-fly by using a 1-dimensional online algorithm. Alternatively, if the hints are bad, then simply running $\ol$ will yield reasonably low regret (although perhaps not as low as in the former case). We combine these two approaches using the technique of Theorem \ref{thm:combine}, and then add some more detailed analysis to derive the optimistic regret guarantee. Importantly, this extra analysis allows us to dispense with the requirement that $\ol$ guarantees regret $\epsilon$ at the origin, so that we can make optimistic versions of essentially any adaptive online learning algorithm.

 \begin{algorithm}
   \caption{Optimistic Reduction}
   \label{alg:opt}
\begin{algorithmic}
   \STATE {\bfseries Input:} Online learning algorithm $\ol$ with domain $W$ and $\bol$ with domain $\R$.
   \FOR{$t=1$ {\bfseries to} $T$}
   \STATE Get $x_t$ from $\ol$ and $y_t$ from $\bol$.
   \STATE Get hint $h_t$.
   \STATE Play $w_t=x_t - y_th_t$, receive loss $g_t$.
   \STATE Send $g_t$ to $\ol$ as the $t$th loss.
   \STATE Send $-\langle g_t, h_t\rangle$ to $\bol$ as the $t$th loss.
   \ENDFOR
\end{algorithmic}
\end{algorithm}
\begin{restatable}{Theorem}{thmopt}\label{thm:opt}
Let $W$ be a Hilbert space. Suppose $\ol$ guarantees regret
\begin{align*}
    R^{\ol}_T(\w)\le A_T(\w)+B_T(\w)\sqrt{\sum_{t=1}^T \|g_t\|^2}
\end{align*}
on gradients $g_t$ and suppose $\bol$ guarantees regret
\begin{align*}
    R^{\bol}_T(\w)\le \epsilon+|\w|C\log(1+|\w|T^c/\epsilon)+|\w|D\sqrt{\sum_{t=1}^T z_t^2\log(1+|\w|T^c/\epsilon)}
\end{align*}
on gradients $z_t$ with $|z_t|\le 1$, where $A_T$ and $B_T$ are arbitrary non-negative functions and $C$, $c$ and $D$ and $\epsilon$ are arbitrary nonnegative constants. Finally, suppose $\|h_t\|\le 1$ and $\|g_t\|\le 1$ for all $t$.
Then Algorithm \ref{alg:opt} guarantees regret
\begin{align*}
R_T(\w)&\le B_T(u)\sqrt{\left[(2C+D^2)\log(e+B_T(u)T^c/\epsilon)+\sum_{t=1}^T \|h_t-g_t\|^2-\|h_t\|^2\right]_1}\\
    &\quad\quad\quad+DB_T(u)\sqrt{\log(e+B_T(u)T^c/\epsilon)} + A_T(u)+\epsilon
\end{align*}
where $[X]_1$ denotes $\max(X,1)$. Further, Algorithm \ref{alg:opt} \emph{simultaneously} guarantees regret
\begin{align*}
    R_T(\w) &\le \epsilon+R^{\ol}_T(\w)
\end{align*}
\end{restatable}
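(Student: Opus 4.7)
The plan is to reduce both statements to a single master inequality produced by the combining trick of Theorem \ref{thm:combine}, reading the 1-D algorithm $\bol$ as choosing a multiplier for the direction $-h_t$. Since $\bol$ receives the scalar losses $z_t = -\langle g_t, h_t\rangle$ with $|z_t| \le \|g_t\|\,\|h_t\| \le 1$, its regret guarantee against any $y \in \R$ can be rewritten as $\sum_t \langle g_t, -y_t h_t + y h_t\rangle \le R^{\bol}_T(y)$. Adding this to the definition of $R^{\ol}_T(u)$ and using $w_t = x_t - y_t h_t$ gives the inequality
\[
R_T(u) \le R^{\ol}_T(u) + R^{\bol}_T(y) - y\sum_{t=1}^T \langle g_t, h_t\rangle \qquad (y \in \R),
\]
and setting $y = 0$ with $R^{\bol}_T(0) \le \epsilon$ immediately yields the ``safe'' guarantee $R_T(u) \le R^{\ol}_T(u) + \epsilon$.

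For the optimistic bound I substitute the polarization identity $2\langle g_t, h_t\rangle = \|g_t\|^2 + \|h_t\|^2 - \|g_t - h_t\|^2$. The key algebraic observation is that the $-\tfrac{y}{2}\sum\|h_t\|^2$ contribution from polarization cancels exactly against the corresponding $+\tfrac{y}{2}\sum\|h_t\|^2$ hidden inside $\tfrac{y}{2}\sum\|g_t-h_t\|^2$, yielding $-y\sum_t\langle g_t, h_t\rangle = -\tfrac{y}{2}S + \tfrac{y}{2}Q$ with $S := \sum\|g_t\|^2$ and $Q := \sum(\|g_t - h_t\|^2 - \|h_t\|^2)$. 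Substituting $R^{\ol}_T(u) \le A_T(u) + B_T(u)\sqrt{S}$ and applying the AM--GM bound $B_T(u)\sqrt{S} - \tfrac{y}{2}S \le B_T(u)^2/(2y)$ eliminates $S$ from the bound. Substituting the $\bol$ guarantee and applying a second AM--GM on the term $yD\sqrt{L \sum z_t^2}$, where $L = \log(1 + yT^c/\epsilon)$ and $\sum z_t^2 \le \sum \|h_t\|^2$, allows me to push a $\tfrac{y}{2}D^2 L$ piece under the square root (producing the $(2C+D^2)L$ logarithm) and peel off the residual as a separate $DB_T(u)\sqrt{L}$ summand via $\sqrt{a+b} \le \sqrt{a}+\sqrt{b}$.

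After these steps the bound reads $A_T(u) + \epsilon + B_T(u)^2/(2y) + \tfrac{y}{2}\big[(2C+D^2)L + Q\big] + DB_T(u)\sqrt{L}$. Optimizing over $y > 0$ produces the main term $B_T(u)\sqrt{(2C+D^2)L + Q}$; when $Q$ is nonpositive or the expression under the root would fall below $1$, I instead fix $y$ so that the argument equals $1$, producing the $[\,\cdot\,]_1$ clamp. The most delicate bookkeeping is the circular dependence of $L$ on the chosen $y$, which I plan to resolve by crudely upper-bounding the minimizing $y^\star$ by a polynomial in $B_T(u)$ (using the $[\,\cdot\,]_1$ floor to prevent $y^\star$ from exploding when $Q \le 0$), so that $L \le \log(e + B_T(u)T^c/\epsilon)$ matches the expression in the statement up to an additive constant inside the logarithm.
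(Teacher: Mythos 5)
Your proposal follows the same architecture as the paper's proof: the decomposition $R_T(\w)\le R^{\ol}_T(\w)+R^{\bol}_T(y)-y\sum_t\langle g_t,h_t\rangle$, the safe bound via $y=0$, the polarization identity, elimination of $S=\sum_t\|g_t\|^2$ by completing the square against $-\tfrac{y}{2}S$, and the final optimization over $y$ with the self-referential logarithm handled by guessing a value $y\le B_T(\w)$ so that $L\le\log(e+B_T(\w)T^c/\epsilon)$ --- this last step is exactly Lemma \ref{thm:balancelogs}. The safe guarantee and the final optimization are correct as you describe them.

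However, there is a genuine problem in your handling of the variance term of $\bol$. You bound $\sum_t z_t^2\le\sum_t\|h_t\|^2$ (using $\|g_t\|\le 1$) and then apply AM--GM, giving $yD\sqrt{L\sum_t\|h_t\|^2}\le\tfrac{y}{2}D^2L+\tfrac{y}{2}\sum_t\|h_t\|^2$. The second term cancels exactly the $-\tfrac{y}{2}\sum_t\|h_t\|^2$ coming from polarization, so what your steps actually yield is $A_T(\w)+\epsilon+\tfrac{B_T(\w)^2}{2y}+\tfrac{y}{2}\bigl[(2C+D^2)L+\sum_t\|g_t-h_t\|^2\bigr]$: the $-\|h_t\|^2$ improvement, which is the distinguishing feature of the stated bound (and is emphasized in the discussion following the theorem), has been destroyed, and the $DB_T(\w)\sqrt{L}$ cross term you claim cannot arise, since $B_T$ and $D$ are never multiplied anywhere in your derivation. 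The intermediate expression you then assert, with $Q=\sum_t(\|g_t-h_t\|^2-\|h_t\|^2)$ intact plus a $DB_T(\w)\sqrt{L}$ summand, therefore does not follow from the steps you describe. The fix is the paper's route: use $\|h_t\|\le 1$ to bound $\sum_t z_t^2=\sum_t\langle g_t,h_t\rangle^2\le\sum_t\|g_t\|^2=S$, so that the $\bol$ variance term joins the $\ol$ term as $\bigl(B_T(\w)+yD\sqrt{L}\bigr)\sqrt{S}$; then $\sup_{S\ge0}\bigl[(B_T(\w)+yD\sqrt{L})\sqrt{S}-\tfrac{y}{2}S\bigr]=\tfrac{(B_T(\w)+yD\sqrt{L})^2}{2y}$ produces both the $\tfrac{y}{2}D^2L$ contribution and the $DB_T(\w)\sqrt{L}$ cross term while leaving $-\tfrac{y}{2}\sum_t\|h_t\|^2$ untouched.
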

Let us unpack this Theorem. If we remove all logarithmic factors, then the Theorem states that
\begin{align*}
    R_T(\w) &\le \tilde O\left(B_T(u)\sqrt{\left[\sum_{t=1}^T \|h_t-g_t\|^2-\|h_t\|^2\right]_1}+A_T(u)+\epsilon\right)
\end{align*}
Next, we recall that \citep{cutkosky2018black} provides a 1-D algorithm that satisfies the conditions of Theorem \ref{thm:opt} for $\bol$ as well as an algorithm that satisfies the conditions for $\ol$ with $B_T(\w)=O(\|\w\|\sqrt{\log(\|\w\|T/\epsilon)})$ and $A_T(\w)=O(\|\w\|\log(\|\w\|T/\epsilon)+\epsilon)$\footnote{for example, consider the regret bound (\ref{eqn:normbound}) with $\lambda=1$}. Thus using these algorithms we obtain:
\begin{align*}
    R_T(\w)&\le \tilde O\left(\epsilon + \|\w\|\sqrt{\left[\sum_{t=1}^T \|h_t-g_t\|^2-\|h_t\|^2\right]_1}\right)
\end{align*}
This is already somewhat better (modulo log factors) than the standard optimistic guarantee by virtue of the $-\|h_t\|^2$ terms. Further, this algorithm is \emph{unconstrained}, and to our knowledge is the first unconstrained algorithm to achieve this optimistic guarantee. Even more, the second part of the Theorem shows that we never do worse than the base algorithm $\ol$ \emph{regardless of the values of $h_t$}. This greatly robustifies optimistic online algorithms, as it allows the use of arbitrary hint sequences that may have absolutely no relationship with $g_t$ without harming the regret guarantees.

Now we provide the proof of Theorem \ref{thm:opt}. As sketched above, the main idea is that we are using Theorem \ref{thm:combine} to combine the regret of $\ol$ and $\bol$. By careful analysis of the regret of these two algorithms we can interpolate between the optimal scenario for $\bol$ (i.e. when the $g_t=h_t$ for all $t$), and the more general adversarial scenario.

\begin{proof}
We write the regret
\begin{align*}
    R_T(\w)&=\sum_{t=1}^T \langle g_t, w_t-\w\rangle\\
    &= \sum_{t=1}^T \langle g_t, x_t - \w\rangle - \langle g_t, h_t\rangle y_t\\
    &\le R^{\ol}_T(\w) + R^{\bol}_T(y) - \sum_{t=1}^T \langle g_t, h_t\rangle y
\end{align*}
Now we can actually immediately see the second part of the Theorem: just set $y=0$ and observe that $R^{\bol}_T(0)\le \epsilon$. With this out of the way, we continue to unpack our regret inequality:
\begin{align*}
    R_T(\w)&\le  A_T(\w)+B_T(\w)\sqrt{\sum_{t=1}^T \|g_t\|^2}+yD\sqrt{\sum_{t=1}^T \langle g_t, h_t\rangle^2\log(1+|y|T^c/\epsilon)}\\
    &\quad\quad+ \epsilon + yC\log (1+|y|T^c/\epsilon)-\sum_{t=1}^T \langle g_t, h_t\rangle y\\
    &\le A_T(\w) + (B_T(\w) + yD\sqrt{\log(1+|y|T^c/\epsilon)})\sqrt{\sum_{t=1}^T \|g_t\|^2} - \sum_{t=1}^T \langle g_t, h_t\rangle y\\
    &\quad\quad+ \epsilon +yC\log(1+|y|T^c/\epsilon)\\
\end{align*}
Where in the second line we used $\|h_t\|\le 1$.

Now consider the identity $-2\langle g_t, h_t\rangle=\|g_t-h_t\|^2-\|g_t\|^2-\|h_t\|^2$. Applying this yields
\begin{align*}
    R_T(\w)&\le yC\log(1+|y|T^c/\epsilon)+ (B_T(\w) + yD\sqrt{\log(1+|y|T^c/\epsilon)})\sqrt{\sum_{t=1}^T \|g_t\|^2} \\
    &\quad\quad\quad-\frac{y}{2}\sum_{t=1}^T\|g_t\|^2 + \frac{y}{2}\sum_{t=1}^T\|h_t-g_t\|^2-\|h_t\|^2+A_T(\w) + \epsilon \\
    &\le yC\log(1+|y|T^c/\epsilon)+ \frac{y}{2}\sum_{t=1}^T\|h_t-g_t\|^2-\|h_t\|^2\\
    &\quad\quad\quad+\sup_{X\ge 0}\left[(B_T(\w) + yD\sqrt{\log(1+|y|T^c/\epsilon)})\sqrt{X} -\frac{y}{2}X\right] +A_T(\w) + \epsilon  \\
    &\le  yC\log(1+yT^c/\epsilon) +\frac{y}{2}\sum_{t=1}^T \|h_t-g_t\|^2-\|h_t\|^2\\
    &\quad\quad+\frac{(B_T(\w) + yD\sqrt{\log(1+|y|T^c/\epsilon)})^2}{2y} + A_T(\w) + \epsilon
\end{align*}
where in the last line we have used the assumption $y\ge 0$.
Now we optimize $y$:
\begin{align*}
    R_T(\w)&\le\inf_{y\ge 0}\left[ yC\log(1+yT^c/\epsilon) +\frac{y}{2}\sum_{t=1}^T \|h_t-g_t\|^2-\|h_t\|^2\right.\\
    &\quad\quad\quad\left.+\frac{(B_T(\w) + yD\sqrt{\log(1+yT^c/\epsilon)})^2}{2y}\right] +A_T(\w) +  \epsilon \\
    &\le \inf_{y\ge 0}\left[ \frac{y}{2}\left((2C+D^2)\log(1+yT^c/\epsilon) +\sum_{t=1}^T \|h_t-g_t\|^2-\|h_t\|^2\right)+\frac{B_T(\w)^2}{2y}\right.\\
    &\quad\quad\quad\left.+DB_T(\w)\sqrt{\log(e+yT^c/\epsilon)}\right]+ A_T(\w) +\epsilon 
\end{align*}
This infimum is computed in Lemma \ref{thm:balancelogs}, yielding:
\begin{align*}
    R_T(\w)&\le B_T(u)\sqrt{\left[(2C+D^2)\log(e+B_T(u)T^c/\epsilon)+\sum_{t=1}^T \|h_t-g_t\|^2-\|h_t\|^2\right]_1}\\
    &\quad\quad\quad+DB_T(u)\sqrt{\log(e+B_T(u)T^c/\epsilon)} + A_T(u)+\epsilon
\end{align*}
as desired.
\end{proof}
\section{Constrained Optimism}\label{sec:constrained}
The reduction Algorithm \ref{alg:opt} requires an unconstrained domain in order to form the updates $x_t - y_th_t$. To move to the constrained setting, we use the unconstrained-to-constrained reduction from \citep{cutkosky2018black}. When used out-of-the-box, this reduction converts an unconstrained algorithm whose regret as a function of the gradients $g_t$ is $R_T(\w,g_t,\dots,g_T)$ into a constrained algorithm that obtains regret $2R_T(\w,\tilde g_t,\dots,\tilde g_t)$ where $\tilde g_t$ is a ``surrogate gradient'' with $\|\tilde g_t\|\le\|g_t\|$. Unfortunately, this is not quite good enough to maintain optimism as $\tilde g_t$ may be less similar to $h_t$ than $g_t$ was. However, by inspecting the internals of the reduction, we can remedy this issue.

Specifically, the reduction of \citep{cutkosky2018black} replaces the iterates $w_t$ of the unconstrained online learning algorithm with $\Pi(w_t)$ and the gradients $g_t$ with $\tilde g_t=\frac{g_t}{2}  + \frac{\|g_t\|}{2} \nabla S (w_t)$, where $\Pi(w)=\argmin_{w'\in W} \|w-w'\|$ and $S(w) = \|w-\Pi(x)\|=\inf_{w\in W}\|x-w\|$ (in Hilbert spaces, $\argmin_{w'\in W} \|w-w'\|$ is always a singleton). We therefore apply the same transformation to the hint $h_t$, replacing it with $\tilde h_t=\frac{h_t}{2}+\frac{\|h_t\|}{2}\nabla S(w_t)$. This strategy is actually somewhat more subtle than it appears because $w_t$ is a function of $\tilde h_t$: $w_t = x_t-y_t\tilde h_t$, and so the setting $\tilde h_t=\frac{h_t}{2}+\frac{\|h_t\|}{2}\nabla S(w_t)$ actually represents an equation that must be solved for the value of $\tilde h_t$. Fortunately, it turns out that this equation is not too difficult to solve, with the help of the following Lemma:

\begin{restatable}{Lemma}{thmhtilde}\label{thm:htilde}
Let $W$ be a convex domain in a Hilbert space $H$. Let $x\in H$, $y\in \R$, $h\in H$. Let $z\in \partial S(x-\frac{yh}{2})$. Suppose $\frac{y\|h\|z}{2} \le S(x-yh/2)$. Then $z\in \partial S(x-y\tilde h)$ for $\tilde h = \frac{h}{2} + \frac{\|h\|z}{2}$.  If instead $\frac{y\|h\|z}{2} < S(x-yh/2)$, then $az\in \partial S(x-y\tilde h)$ for $\tilde h = \frac{h}{2}+\frac{\|h\|az}{2}$ where $a= \frac{2S(x-\frac{yh}{2})}{y\|h\|}$.
\end{restatable}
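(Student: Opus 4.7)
The plan is to interpret the lemma as a fixed-point construction for the distance-function subgradient. Let $S(w) := \inf_{w' \in W}\|w - w'\|$, so (by standard convex analysis) $S$ is $1$-Lipschitz with $\partial S(v) = \{(v-\Pi(v))/S(v)\}$ when $v\notin W$, $\partial S(v) = N_W(v) \cap \overline{B(0,1)}$ when $v\in\partial W$, and $\partial S(v) = \{0\}$ when $v\in\mathrm{int}(W)$. Setting $v := x - yh/2$, the point $x - y\tilde h$ simplifies to $v - y\|h\|z/2$ in Case~1 and to $v - y\|h\|az/2 = v - S(v)z$ in Case~2, so the lemma reduces to verifying that $z$ (resp.\ $az$) lies in the subdifferential of $S$ at that shifted point.

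The geometric kernel I would establish first is: if $v\notin W$ and $z = \nabla S(v)$ (a unit outward vector), then for every $t \in [0, S(v)]$ the point $v - tz$ lies on the segment from $v$ to $\Pi(v)$, so $\Pi(v - tz) = \Pi(v)$ and $z \in \partial S(v - tz)$; at the endpoint $t = S(v)$ we have $v - tz = \Pi(v)\in\partial W$ and $z$ lies in $\partial S(\Pi(v))$ as a unit outward normal. This is immediate from the definition of $\Pi$ and the fact that $S$ is affine along this segment with slope $-1$.

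For Case~1, reading the hypothesis as $y\|h\|\|z\|/2 \le S(v)$ (so that the shift distance stays within the constant-gradient segment), I would substitute $\tilde h = h/2 + \|h\|z/2$, observe $x - y\tilde h = v - y\|h\|z/2$, and invoke the geometric claim with $t = y\|h\|/2$ to conclude $z \in \partial S(x - y\tilde h)$. Edge cases when $v \in W$ (forcing $S(v) = 0$, so $z$ is a normal at a boundary point or $z = 0$ at an interior one) collapse because then $x - y\tilde h = v$ and $z \in \partial S(v)$ holds by hypothesis.

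For Case~2, the strict inequality $y\|h\|\|z\|/2 > S(v)$ forces $v \notin W$ and $\|z\|=1$, so $a := 2S(v)/(y\|h\|) \in [0,1)$ is well-defined. Then $x - y\tilde h = v - S(v)z = \Pi(v) \in \partial W$, and I need to show $az \in \partial S(\Pi(v))$. This has two parts: $\|az\| = a < 1$, and $az \in N_W(\Pi(v))$, the latter following from the projection inequality $\langle v - \Pi(v), w - \Pi(v)\rangle \le 0$ for all $w \in W$ (which places $z$ in the outward normal cone) together with the cone's closure under nonnegative scaling. The main obstacle is simply the bookkeeping of edge cases (interior vs.\ boundary placement of $v$) and holding fast to the subdifferential calculus of $S$; once the segment/constant-gradient fact is in hand, the two cases are transparent substitutions.
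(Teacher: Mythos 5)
Your proposal is correct and follows essentially the same route as the paper: the ``geometric kernel'' you isolate (the projection, and hence the subgradient of $S$, is constant along the segment from $v$ to $\Pi(v)$) is exactly the paper's Lemma~\ref{thm:projgrad}, from which the statement follows by the same substitutions $x - y\tilde h = v - \tfrac{y\|h\|z}{2}$ and $x - y\tilde h = \Pi(v)$. The only cosmetic difference is at the endpoint: you justify $az\in\partial S(\Pi(v))$ via the normal-cone characterization of $\partial S$ on $\partial W$, whereas the paper notes $0\in\partial S(\Pi(v))$ and uses convexity of the subdifferential to interpolate $a\delta = a\delta + (1-a)0$.
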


Intuitively, this Lemma tells us that most of the time if we set $\tilde h_t = \frac{h}{2} + \frac{\|h\|z}{2}$ for $z=\nabla S(x_t-yh_t/2)$, we will have $\tilde h_t = \frac{h_t}{2}+\frac{\|h_t\|}{2}\nabla S(w_t)$, where $w_t = x_t-y_t\tilde h_t$. This suggests the reduction given by Algorithm \ref{alg:constrained} for constrained optimism.

\begin{algorithm}
   \caption{Optimism with Constraints}
   \label{alg:constrained}
\begin{algorithmic}
   \STATE {\bfseries Input:} Online learning algorithms $\ol$ with domain $W$ and $\bol$ with domain $\R$.
   \FOR{$t=1$ {\bfseries to} $T$}
   \STATE Get $x_t$ from $\ol$ and $y_{t,i}$ from $\bol$.
   \STATE Get hint $h_t$.
   \STATE Compute $z_t\in \partial S(x_t-y_t\frac{h_t}{2})$. 
   \IF{$y_t\|h_t\|/2> S(x+\frac{y_th_t}{2})$}
   \STATE Set $a = \frac{2S(x+\frac{y_th_t}{2})}{y_t\|h_t\|}$.
   \STATE Set $z_t=az$.
   \ENDIF
   \STATE Set $\tilde h_t = \frac{h_t}{2} + \frac{\|h_t\|z_t}{2}$
   \STATE Set $\tilde w_t=x_t - y_t\tilde h_t$.
   \STATE Play $w_t = \Pi(w_t)$, receive loss $g_t$.
   \STATE Set $\tilde g_t = \frac{g_t}{2} + \frac{z_t\|g_t\|}{2}$
   \STATE Send $\tilde g_t$ to $\ol$ as the $t$th loss.
   \STATE Send $-\langle \tilde g_t, \tilde h_t\rangle$ to $\bol$ as the $t$th loss.
   \ENDFOR
\end{algorithmic}
\end{algorithm}
\begin{restatable}{Theorem}{thmconstrained}\label{thm:constrained}
Under the same assumptions as Theorem \ref{thm:opt}, with the exception that $W$ is now a convex domain in a Hilbert space rather than necessarily the entire space.
Then Algorithm \ref{alg:constrained} guarantees regret
\begin{align*}
R_T(\w)&\le 2B_T(u)\sqrt{\left[(2C+D^2)\log(e+B_T(u)T^c/\epsilon)+\sum_{t=1}^T \| h_t-g_t\|^2\right]_1}\\
    &\quad\quad\quad+2DB_T(u)\sqrt{\log(e+B_T(u)T^c/\epsilon)} + 2A_T(u)+2\epsilon
\end{align*}
where $[X]_1$ denotes $\max(X,1)$. Further, Algorithm \ref{alg:opt} \emph{simultaneously} guarantees regret
\begin{align*}
    R_T(\w) &\le 2\epsilon+2A_T(\w)+2B_T(\w)\sqrt{\sum_{t=1}^T \|g_t\|^2}
\end{align*}
\end{restatable}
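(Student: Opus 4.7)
The plan is to recognize Algorithm \ref{alg:constrained} as running Algorithm \ref{alg:opt} on a surrogate sequence of gradients $\tilde g_t$ and hints $\tilde h_t$, and then combine Theorem \ref{thm:opt} with the unconstrained-to-constrained reduction of \citep{cutkosky2018black}. I would first invoke Lemma \ref{thm:htilde} to verify that the conditional rescaling in Algorithm \ref{alg:constrained} ensures $z_t \in \partial S(\tilde w_t)$, where $\tilde w_t = x_t - y_t \tilde h_t$ and $\tilde h_t = \frac{h_t}{2} + \frac{\|h_t\| z_t}{2}$. This makes the implicitly-defined $\tilde h_t$ well-posed and confirms that $\tilde h_t = \frac{h_t}{2} + \frac{\|h_t\|}{2}\nabla S(\tilde w_t)$ for an appropriate choice of subgradient.

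Next, I would apply the Cutkosky-2018 reduction. Using convexity of $S$, the fact that $S$ vanishes on $W$, and $\langle g_t, \tilde w_t - \Pi(\tilde w_t)\rangle \le \|g_t\| S(\tilde w_t)$, one obtains the standard identity
\[
\langle g_t, w_t - u\rangle = \langle g_t, \Pi(\tilde w_t) - u\rangle \le 2\langle \tilde g_t, \tilde w_t - u\rangle
\]
for every $u \in W$. Summing gives $R_T(\w) \le 2\sum_t \langle \tilde g_t, \tilde w_t - \w\rangle$. The right-hand side is precisely twice the regret of Algorithm \ref{alg:opt} run on the surrogate problem: $\tilde w_t = x_t - y_t \tilde h_t$ is of exactly the form prescribed by Algorithm \ref{alg:opt}, $\ol$ receives $\tilde g_t$, and $\bol$ receives $-\langle \tilde g_t, \tilde h_t\rangle$.

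I would then apply Theorem \ref{thm:opt} to this surrogate problem. Since $S$ is $1$-Lipschitz we have $\|z_t\|\le 1$, so $\|\tilde g_t\|\le \|g_t\|\le 1$ and $\|\tilde h_t\|\le \|h_t\|\le 1$, satisfying the hypotheses. A direct calculation gives
\[
\tilde g_t - \tilde h_t = \tfrac{1}{2}(g_t - h_t) + \tfrac{1}{2}(\|g_t\| - \|h_t\|) z_t,
\]
and the triangle and reverse-triangle inequalities yield $\|\tilde g_t - \tilde h_t\|\le \|g_t - h_t\|$. Dropping the nonpositive term $-\|\tilde h_t\|^2$ from the bound of Theorem \ref{thm:opt} and doubling produces the first claim. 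The simultaneous bound follows immediately from the doubled version of the second part of Theorem \ref{thm:opt}, using $\|\tilde g_t\|\le \|g_t\|$ to convert $\sqrt{\sum \|\tilde g_t\|^2}$ back to $\sqrt{\sum \|g_t\|^2}$.

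The main obstacle is conceptual rather than computational, and is handled by Lemma \ref{thm:htilde}: the hint $\tilde h_t$ appears on both sides of its defining equation (since $z_t$ depends on $\tilde w_t$, which depends on $\tilde h_t$), and without that lemma it is not even clear that a consistent $\tilde h_t$ can be computed. Once the implicit definition is resolved, the proof is a clean composition of Theorem \ref{thm:opt} with the surrogate-gradient reduction, the only real computation being the norm comparison $\|\tilde g_t - \tilde h_t\|\le \|g_t - h_t\|$.
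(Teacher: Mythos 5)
Your proposal is correct and follows essentially the same route as the paper's own proof: invoke Lemma \ref{thm:htilde} to make the implicit definition of $\tilde h_t$ well-posed, use the surrogate-loss inequality $\tfrac{1}{2}\langle g_t, w_t-\w\rangle \le \langle \tilde g_t,\tilde w_t-\w\rangle$ from the constrained-to-unconstrained reduction, run the Theorem \ref{thm:opt} analysis on $(\tilde g_t,\tilde h_t)$, and finish with the bound $\|\tilde g_t-\tilde h_t\|\le\|g_t-h_t\|$ and $\|\tilde g_t\|\le\|g_t\|$. The only cosmetic difference is that the paper re-derives the regret decomposition of Theorem \ref{thm:opt} inline rather than citing it as a black box, which changes nothing in substance.
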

With this constrained algorithm in hand, we can take advantage of adaptive gradient descent algorithms \citep{duchi10adagrad,mcmahan2010adaptive} that obtain guarantees $R_T(\w)\le B\sqrt{2\sum_{t=1}^T \|g_t\|^2}$ where $B$ is the diameter of $W$. Using such an algorithm as $\ol$ and the same 1-dimensional algorithm for $\bol$ as in the discussion following Theorem \ref{thm:opt}, we obtain a regret of
\begin{align*}
    R_T(\w)&\le O\left(\epsilon + B\sqrt{2\sum_{t=1}^T \|g_t-h_t\|^2 + \log(BT/\epsilon)} + B\log(e+BT/\epsilon)\right)
\end{align*}
which matches prior constrained optimistic guarantees up to sub-asymptotic log factors while being robust to poorly chosen $h_t$.

\section{Many Hints At Once}\label{sec:manyhints}
The classical optimistic online learning setup considers a single hint $h_t$ provided at each round. However, one could also imagine a scenario in which \emph{multiple} hints $h_{t,i},\dots,h_{t,i}$ are provided in each round. Our reduction allows us to handle this case seamlessly by combining the best-of-all-words analysis of Theorem \ref{thm:combine} with Algorithm \ref{alg:opt}. In a nutshell, we consider updates of the form $x_t -\sum_{i=1}^k y_{t,i}h_{t,i}$ and use $k$ independent 1-dimensional optimizers to optimize each $y_{t,i}$. This roughly corresponds to applying Theorem \ref{thm:combine} to the problem of choosing the best hints, and so we suffer only an additive penalty of $\epsilon k$ to compete with the best hint sequence. The resulting pseudocode is in Algorithm \ref{alg:optmanyhints}, which we analyze in Theorem \ref{thm:optmanyhints}

\begin{algorithm}
   \caption{Optimism with Many Hints}
   \label{alg:optmanyhints}
\begin{algorithmic}
   \STATE {\bfseries Input:} Online learning algorithm $\ol$ with domain $W$ and $\bol$ with domain $\R$.
   \STATE Initialize $k$ copies of $\bol$, $\bol_1,\dots,\bol_k$.
   \FOR{$t=1$ {\bfseries to} $T$}
   \STATE Get $x_t$ from $\ol$ and $y_{t,i}$ from $\bol_i$ for $i\in[1,k]$.
   \STATE Get hints $h_{t,1},\dots,h_{t,k}$.
   \STATE Play $w_t=x_t - \sum_{i=1}^k y_{t,i}h_{t,i}$, receive loss $g_t$.
   \STATE Send $g_t$ to $\ol$ as the $t$th loss.
   \STATE Send $-\langle g_{t,i}, h_t\rangle$ to $\bol_i$ as the $t$th loss for all $i$.
   \ENDFOR
\end{algorithmic}
\end{algorithm}

This reduction obtains the guarantee
\begin{restatable}{Theorem}{thmoptmanyhints}\label{thm:optmanyhints}
Under the same assumptions as Theorem \ref{thm:opt},
Algorithm \ref{alg:opt} guarantees regret
\begin{align*}
R_T(\w)&\le B_T(u)\sqrt{\left[(2C+D^2)\log(e+B_T(u)T^c/\epsilon)+\sum_{t=1}^T \|h_{t,i}-g_t\|^2-\|h_{t,i}\|^2\right]_1}\\
    &\quad\quad\quad+DB_T(u)\sqrt{\log(e+B_T(u)T^c/\epsilon)} + A_T(u)+k\epsilon
\end{align*}
for all $i$, where $[X]_1$ denotes $\max(X,1)$. Further, Algorithm \ref{alg:opt} \emph{simultaneously} guarantees regret
\begin{align*}
    R_T(\w) &\le k\epsilon + R^{\ol}_T(\w)
\end{align*}
\end{restatable}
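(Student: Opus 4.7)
The idea is to apply the combining trick of Theorem~\ref{thm:combine} in parallel across the $k$ hint optimizers $\bol_1,\dots,\bol_k$, and then reduce to the single-hint analysis of Theorem~\ref{thm:opt} for whichever hint sequence we wish to compete with. Because we are adding iterates, the contribution of each $\bol_i$ decouples from the others, so zeroing out all but one comparison scalar in the $\bol_i$'s lets us pick out any single hint stream at an additive cost of only $(k-1)\epsilon$.

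Concretely, I would first expand, for arbitrary $\w$ and arbitrary scalars $y_1,\dots,y_k\in\R$,
\begin{align*}
R_T(\w) &= \sum_{t=1}^T \langle g_t,x_t-\w\rangle - \sum_{i=1}^k\sum_{t=1}^T \langle g_t,h_{t,i}\rangle y_{t,i}\\
&\le R^{\ol}_T(\w) + \sum_{i=1}^k\left(R^{\bol_i}_T(y_i) - \sum_{t=1}^T \langle g_t,h_{t,i}\rangle y_i\right),
\end{align*}
exactly as in Theorem~\ref{thm:opt} but summed over all $k$ copies of $\bol$ (each $\bol_i$ sees loss stream $z_{t,i}=-\langle g_t,h_{t,i}\rangle$). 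Setting every $y_i=0$ and using $R^{\bol_i}_T(0)\le\epsilon$ immediately yields the safe bound $R_T(\w)\le R^{\ol}_T(\w)+k\epsilon$. For the optimistic bound, fix an arbitrary index $i$, set $y_j=0$ for $j\ne i$, and leave $y_i$ free, obtaining
\begin{align*}
R_T(\w) &\le R^{\ol}_T(\w) + (k-1)\epsilon + R^{\bol_i}_T(y_i) - \sum_{t=1}^T\langle g_t,h_{t,i}\rangle y_i.
\end{align*}
This is precisely the starting inequality of Theorem~\ref{thm:opt} with $h_t$ replaced by $h_{t,i}$, plus the additive $(k-1)\epsilon$. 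I would then transcribe the remainder of that proof verbatim: substitute the $\bol_i$ bound, apply the identity $-2\langle g_t,h_{t,i}\rangle=\|g_t-h_{t,i}\|^2-\|g_t\|^2-\|h_{t,i}\|^2$, complete the square to eliminate $\sqrt{\sum_t\|g_t\|^2}$, and optimize the remaining infimum over $y_i\ge 0$ using Lemma~\ref{thm:balancelogs}. Collecting $(k-1)\epsilon$ with the $\epsilon$ that comes out of Theorem~\ref{thm:opt}'s calculation into a single $k\epsilon$ gives the claimed bound, and since $i$ was arbitrary the bound holds simultaneously for every $i$.

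There is no genuine obstacle; the theorem is essentially a mechanical composition of Theorems~\ref{thm:combine} and~\ref{thm:opt}, with the heavy lifting already done by the latter. The one piece of bookkeeping to double-check is that every $\bol_i$ really does satisfy $R^{\bol_i}_T(0)\le\epsilon$, which follows because the hypothesized regret bound for $\bol$ vanishes at $y=0$, so the property is inherited by each independent copy. Assembling the pieces yields the stated guarantee exactly.
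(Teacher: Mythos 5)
Your proposal is correct and follows essentially the same route as the paper's own proof: expand the regret into the $\ol$ term plus the $k$ decoupled $\bol_i$ terms, set all comparison scalars to zero for the safe bound, and for each fixed $i$ zero out the other $k-1$ scalars (paying $(k-1)\epsilon$) before rerunning the Theorem~\ref{thm:opt} calculation verbatim. No gaps.
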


\section{Best Fixed Hint}\label{sec:fixed}

So far we have discussed how to use hints $h_t$ effectively, but given no consideration to where the hints come from. In many cases, there is no external oracle providing hints and so they must be constructed from other information. One popular choice is $h_t=g_{t-1}$. This yields bounds that depend on $\sum_{t=1}^T \|g_t-g_{t-1}\|^2$ and so obtain low regret when the gradients are ``slowly varying''. Another approach suggested by \citep{hazan2010extracting} yields regret bounds that depend on $\sum_{t=1}^T \|g_t -\overline{g}\|^2$ where $\overline{g}=\tfrac{1}{T} \sum_{t=1}^T g_t$ - which is an optimistic regret bound using the best \emph{fixed} hint. In this section, we suggest a simple scheme that generates hints that perform as well as this latter bound, which somewhat streamlines the analysis of \citep{hazan2010extracting}.  By utilizing Theorem \ref{thm:optmanyhints}, we can obtain both bounds at the same time.

The technique is quite simple: we use an online learning algorithm to choose $h_t$. Define $\ell_t(h) = \|g_t-h\|^2$, then we have
\begin{align*}
    \sum_{t=1}^T \|g_t-h_t\|^2\le \sum_{t=1}^T \ell_t(h_t)-\ell_t(\overline{h})+\sum_{t=1}^T \|g_t-\overline{h}\|^2
\end{align*}
for any arbitrary $\overline{h}$. Further $\sum_{t=1}^T \ell_t(h_t)-\ell_t(\overline{h})$ is simply the regret of an online algorithm that plays $h_t$ in response to losses $\ell_t$. Conveniently, $\ell_t(h)$ is strongly-convex and so if we use the Follow-The-Leader algorithm to pick $h_t$ (which corresponds to using the running-average $h_t=\frac{1}{t-1}\sum_{i=1}^{t-1}g_i$), we obtain \citep{mcmahan2014survey}:
\begin{align*}
    \sum_{t=1}^T \ell_t(h_t)-\ell_t(\overline{h})\le O(\log(T))
\end{align*}
Plugging this into the regret bound of Theorem \ref{thm:opt}, we have regret
\begin{align}
    R_T(\w)&\le \tilde O\left (\epsilon + \|\w\|\sqrt{\sum_{t=1}^T \|g_t-\overline{g}\|^2}\right )\label{eqn:fixed}
\end{align}
Up to log factors, this represents a generic improvement in adaptivity over the standard regret bound that depends on $\sum_{t=1}^T \|g_t\|^2$, and generalizes the regret bound of \citep{hazan2010extracting} to unconstrained domains. Further, we remark in Appendix \ref{sec:concentration} that the existence of this algorithm actually provides a simple proof of an empirical Bernstein bound in Hilbert spaces.

The above technique can actually be improved in the unconstrained setting. Since our unconstrained optimistic guarantee depends on $\sum_{t=1}^T \|g_t-h_t\|-\|h_t\|^2=\sum_{t=1}^T \langle g_t,g_t-2h_t\rangle$ rather than $\sum_{t=1}^T \|g_t-h_t\|^2$, we can set $\ell_t(h) = \langle g_t, g_t-2h\rangle$ to obtain an even tighter bound. Notice now that $\ell_t$ is no longer strongly-convex, but it is still convex.  Thus we can use an adaptive gradient descent algorithm with domain $\{\|h\|\le 1\}$ (e.g. Adagrad) \citep{duchi10adagrad, mcmahan2010adaptive} to obtain:
\begin{align*}
    \sum_{t=1}^T \ell_t(h_t)-\ell_t(\overline{h}) \le O\left(\sqrt{\sum_{t=1}^T \|g_t\|^2}\right)
\end{align*}
In this case the optimal value of $\overline{h}$ is $-\tfrac{\sum_{t=1}^T g_t}{\|\sum_{t=1}^T g_t\|}$, so that we have
\begin{align*}
    \sum_{t=1}^T \|g_t -h_t\|^2 - \|h_t\|^2&\le \sum_{t=1}^T \|g_t-\overline{h}\|^2-\|\overline{h}\|^2+\ell_t(h_t)-\ell_t(\overline{h})\\
    &\le \sum_{t=1}^T \|g_t\|^2 - 2\left\|\sum_{t=1}^T g_t\right\|+O\left (\sqrt{\sum_{t=1}^T \|g_t\|^2}\right)
\end{align*}
If we then apply the optimistic bound of Theorem \ref{thm:opt}, we have
\begin{align*}
    R_T(\w) &\le \tilde O\left(\epsilon + \|\w\|\sqrt{\sum_{t=1}^T \|g_t\|^2-2\left\|\sum_{t=1}^T  g_t\right\| + \sqrt{\sum_{t=1}^T \|g_t\|^2}}\right)
\end{align*}

\section{Conclusion}
We introduced the simple strategy of adding iterates as a method for obtaining best-of-all-worlds style regret guarantees in parameter-free online learning. Further, a variation on this technique yields \emph{optimistic} regret bounds. Our optimistic algorithm is a generic reduction that converts any adaptive online learning algorithm into an optimistic algorithm. This extends optimism to unconstrained domains, allows algorithms to use many sequences of hints, and does not degrade performance when the hints are poor. Finally, we provide a simple technique that competes with the best fixed hint, which can be used to provide a simple proof of an empirical Bernstein bound. Intuitively, we achieved optimism by combining an algorithm that had an excellent best-case guarantee but a poor worst-case guarantee with a ``safety-net'' algorithm that had reasonable worst-case guarantees. It is our hope that similar synergies with other algorithms will yield further increases in adaptivity.

\small
\bibliographystyle{unsrt}
\bibliography{all}

\appendix

\section{Concentration Inequality}\label{sec:concentration}
In this section we convert the ``best fixed hint'' bound in Section \ref{sec:fixed} into a concentration inequality in Hilbert spaces following the approach of \cite{rakhlin2015equivalence}, who describe an elegant general equivalence between online learning algorithms and concentration inequalities. Although we suspect the constants in our bound can be significantly improved by a more involved direct analysis, we think the simplicity of this argument is interesting in of itself. First we describe the general procedure to turn regret bounds into concentration inequalities. We run an online algorithm with gradients $g_t=X_t-\E[X_t]$ where $X_1,\dots,X_t$ are i.i.d. random variables such that $\|X_t- \E[X_t]\|\le 1$ with probability 1.
Suppose our algorithm guarantees $R_T(0)\le \epsilon$ for some $\epsilon$. Then if we set $\w=-c\frac{\sum_{t=1}^T g_t}{\|\sum_{t=1}^T g_t\|}$ for some $c$, we have:
\begin{align*}
    R_T(\w) = \sum_{t=1}^T \langle g_t, w_t\rangle + c\left \|\sum_{t=1}^T g_t\right\|\\
    \epsilon - \sum_{t=1}^T \langle g_t, w_t\rangle = c\left\|\sum_{t=1}^T g_t\right\| - R_T(u) +\epsilon
    \epsilon = \E\left[c\left\|\sum_{t=1}^T g_t\right\| - R_T(u) +\epsilon\right]
\end{align*}

Further, since $\epsilon-\sum_{t=1}^T \langle g_t, w_t\rangle=\epsilon - R_T(0)$, we have $\epsilon-\sum_{t=1}^T \langle g_t, w_t\rangle\ge 0$ so that by Markov's inequality we can say that with probability at least $1-\delta$,
\begin{align*}
    c\left\|\sum_{t=1}^T g_t\right\| - R_T(\w) +\epsilon&\le \frac{\epsilon}{\delta}
\end{align*}
This, in tandem with appropriate algebra, provides a concentration inequality of roughly $\|\sum_{t=1}^T X_t-\E[\sum_{t=1}^T X_t]\|\le R_T(-\tfrac{\sum_{t=1}^T g_t}{\|\sum_{t=1}^T g_t\|})$. In particular, since $R_T(\w)$ depends on $g_t-\overline{g}$, we have $g_t-\overline{g}=X_t-\overline{X}$ where $\overline{X} = \tfrac{1}{T}\sum_{t=1}^T X_t$, and so we recover the empirical Bernstein inequality \citep{maurer2009empirical}, generalized to Hilbert spaces:
\begin{restatable}{Theorem}{thmebi}\label{thm:ebi}
Suppose $X_1,\dots,X_T$ are i.i.d. random variables in a Hilbert space such that $\|X_t-\E[X_t]\|\le 1$ with probability 1. Then with probability at least $1-\delta$,
\begin{align*}
    \left\|\sum_{t=1}^T X_t - \E\left[\sum_{t=1}^T X_t\right]\right\|&\le \tilde O\left(1+\sqrt{\sum_{t=1}^T \left\|X_t-\tfrac{\sum_{t=1}^T X_t}{T}\right\|^2}\right)
\end{align*}
\end{restatable}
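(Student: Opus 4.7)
The plan is to follow the recipe sketched in the paragraph just before the theorem statement, using the online algorithm from Section \ref{sec:fixed} as the engine and converting its regret guarantee into a tail bound via Markov's inequality. Set $g_t := X_t - \E[X_t]$, which satisfies $\|g_t\|\le 1$ by hypothesis, and feed this sequence to the algorithm from Section \ref{sec:fixed}. That algorithm ensures $R_T(0)\le \epsilon$ and also the bound (\ref{eqn:fixed}): $R_T(u)\le \tilde O(\epsilon + \|u\|\sqrt{\sum_t \|g_t-\overline{g}\|^2})$. Crucially, since $g_t - \overline{g} = X_t - \overline{X}$, the empirical variance term matches the target of the theorem.

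Next, I would choose the data-dependent comparator $u = -c\, S_T/\|S_T\|$ with $S_T := \sum_t g_t$ and a tuning constant $c>0$ (the case $S_T=0$ is trivial). By the definition of regret,
\[
c\|S_T\| \;=\; -\sum_t \langle g_t, u\rangle \;=\; R_T(u) - R_T(0).
\]
The key probabilistic step is then to observe that $w_t$ is a function of $g_1,\dots,g_{t-1}$ only, while $g_t$ is independent of this past with zero mean, so $\E[\langle g_t, w_t\rangle]=0$ for every $t$ and hence $\E[R_T(0)]=0$. Because $R_T(0)\le \epsilon$ holds deterministically, the nonnegative random variable $\epsilon - R_T(0)$ has mean exactly $\epsilon$, and Markov's inequality gives, with probability at least $1-\delta$,
\[
\epsilon - R_T(0) \;\le\; \frac{\epsilon}{\delta}.
\]

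Combining this with the identity above yields, with probability at least $1-\delta$,
\[
c\|S_T\| \;\le\; R_T(u) + \frac{\epsilon}{\delta} - \epsilon \;\le\; \tilde O\!\left(\epsilon + c\sqrt{\sum_{t=1}^T \|X_t - \overline{X}\|^2}\right) + \frac{\epsilon}{\delta},
\]
since $\|u\|=c$. Dividing by $c$ and tuning, say, $c=1$ and $\epsilon = \delta$ (any comparable choice works, with logarithmic factors absorbed into $\tilde O$) collapses the $\epsilon/c$ and $\epsilon/(c\delta)$ terms into an additive $\tilde O(1)$, producing
\[
\|S_T\| \;\le\; \tilde O\!\left(1 + \sqrt{\sum_{t=1}^T \|X_t - \overline{X}\|^2}\right),
\]
which is exactly the claimed inequality once one recalls $S_T = \sum_t X_t - \E[\sum_t X_t]$.

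The only delicate step is the verification that $\E[R_T(0)]=0$, which depends on the algorithm's predictions being predictable with respect to the natural filtration and on the i.i.d./bounded assumption on the $X_t$; everything else is an algebraic combination of the previously established regret bound (\ref{eqn:fixed}) with a one-line Markov argument. No new analysis of the learning algorithm is required — the work has already been done in Section \ref{sec:fixed}.
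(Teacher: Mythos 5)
Your proposal is correct and follows essentially the same route as the paper's own proof: run the best-fixed-hint algorithm of Section \ref{sec:fixed} on $g_t = X_t - \E[X_t]$, pick the comparator $u = -c\sum_t g_t/\|\sum_t g_t\|$, apply Markov's inequality to the nonnegative mean-$\epsilon$ variable $\epsilon - R_T(0)$, and tune $c=1$, $\epsilon=\delta$. Your explicit justification that $\E\left[\sum_t \langle g_t, w_t\rangle\right]=0$ via predictability of $w_t$ is a point the paper leaves implicit, but the argument is otherwise identical.
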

\begin{proof}
Recall that our proof strategy is to run an online learning algorithm on the gradient sequence $g_t = X_t-\E[X_t]$, which yields 
\begin{align*}
    R_T(\w) = \sum_{t=1}^T \langle g_t, w_t\rangle + c\left \|\sum_{t=1}^T g_t\right\|\\
    \epsilon = \E\left[c\left\|\sum_{t=1}^T g_t\right\| - R_T(u) +\epsilon\right]
\end{align*}
Suppose the online learning algorithm is an algorithm that obtains an optimistic guarantee with a best fixed-hint:
\begin{align*}
    R_T(\w) &\le O\left(\epsilon+ \|\w\|\sqrt{\sum_{t=1}^T  \|g_t - \overline{g}\|^2\log(\|w\|T/\epsilon)} + \|\w\|\log(\|\w\|T/\epsilon)\right)
\end{align*}
where $\overline{g} = \tfrac{1}{T}\sum_{t=1}^T g_t$. Then $R_T(0)\le \epsilon$ with probability 1, so that $\epsilon-\sum_{t=1}^T \langle g_t, w_t\rangle\ge 0$ with probability 1. Therefore by Markov's inequality, with probability at least $1-\delta$, 
\begin{align*}
    c\left\|\sum_{t=1}^T g_t\right\| - R_T(\w) +\epsilon&\le \frac{\epsilon}{\delta}\\
    \left\|\sum_{t=1}^T g_t\right\| &\le \frac{R_T(\w)}{c} + \frac{\epsilon(1+\delta)}{c\delta}\\
    \left\|\sum_{t=1}^T X_t - \E[\sum_{t=1}^T X_t]\right\| &\le \inf_{c} \frac{R_T(\w)-\epsilon}{c} + \frac{\epsilon}{c\delta}
\end{align*}
Now setting $\epsilon =\delta$ and $c=1$ we have with probability at least $1-\delta$:
\begin{align*}
    \left\|\sum_{t=1}^T X_t - \E[\sum_{t=1}^T X_t]\right\| &\le R_T(\w)-\epsilon +1
\end{align*}

Plugging in our optimistic regret bound we have with probability at least $1-\delta$:
\begin{align*}
    \left\|\sum_{t=1}^T X_t - \E[\sum_{t=1}^T X_t]\right\|&\le  O\left(1+ \|\w\|\sqrt{\sum_{t=1}^T  \|g_t - \overline{g}\|^2\log(T/\delta)} + \log(\|\w\|T/\epsilon)\right)\\
    &=O\left(1+ \sqrt{\sum_{t=1}^T \|X_t - \overline{X}\|^2\log(T/\delta)} + \log(T/\delta)\right)
\end{align*}
where we have observed that $\|u\|=c$ and $X_t-\tfrac{\sum_{t=1}^T X_t}{T}=g_t-\overline{g}$.
\end{proof}

\section{Technical Lemmas}
In this section we prove the Lemmas used in the main text. First, the following Lemma shows that we can discretize the space of $p$-norms:
\begin{restatable}{Lemma}{thmpgrid}\label{thm:pgrid}
Let $q_0=2$ and $\frac{1}{q_{i}} = \frac{1}{q_{i-1}} -\frac{1}{\log(d)}$ for all $i\in\{1,\dots,\lfloor \log(d)/2\rfloor\}$. Let $p_i$ be defined by $\frac{1}{p_i}+\frac{1}{q_i}=1$. Then for any $p\in[1,2]$, there exists $i$ such that $p_i\ge p$, $\|x\|_{p_i}\le \|x\|_p$ and $\|x\|_{q_i}\le e\|x\|_q$ for all $x$, where $\frac{1}{p}+\frac{1}{q}=1$.
\end{restatable}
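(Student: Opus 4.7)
The plan is to exploit two classical inequalities relating $\ell^p$-norms on $\mathbb R^d$: the monotonicity fact $\|x\|_{p'}\le\|x\|_p$ whenever $p\le p'$, and the dimension-dependent comparison $\|x\|_r\le d^{1/r-1/s}\|x\|_s$ for $r\le s$. The identity $d^{1/\log d}=e$ is what drives the choice of step size $1/\log d$ in the grid and is responsible for the constant $e$ in the conclusion.

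The sequence $1/q_i$ is arithmetic with step $1/\log d$, starting at $1/q_0=1/2$ and decreasing to a value in $[0,1/(2\log d)]$, so the grid is a $(1/\log d)$-net of essentially the interval $[0,1/2]$. Given $p\in[1,2]$ with conjugate exponent $q$, we have $1/q\in[0,1/2]$. First I would set $i^*$ to be the largest index with $1/q_{i^*}\ge 1/q$; this index exists since $i=0$ gives $1/q_0=1/2\ge 1/q$. Translated through the conjugacy relation, this picks the smallest $p_i$ subject to $p_i\ge p$.

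The heart of the argument is the two-sided bound $0\le 1/q_{i^*}-1/q\le 1/\log d$. The lower bound is the defining property of $i^*$. For the upper bound, split into cases. If $i^*<\lfloor\log d/2\rfloor$, then maximality of $i^*$ gives $1/q_{i^*+1}<1/q$, hence $1/q_{i^*}-1/q<1/q_{i^*}-1/q_{i^*+1}=1/\log d$. If $i^*=\lfloor\log d/2\rfloor$, a short parity check on $\log d$ shows $1/q_{i^*}\le 1/\log d$ itself, which bounds the difference since $1/q\ge 0$.

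With this inequality in hand the three claims are immediate: (i) $p_{i^*}\ge p$ by construction; (ii) $\|x\|_{p_{i^*}}\le \|x\|_p$ follows from monotonicity of $p$-norms since $p_{i^*}\ge p$; and (iii) since $q_{i^*}\le q$, the dimension-dependent comparison gives $\|x\|_{q_{i^*}}\le d^{1/q_{i^*}-1/q}\|x\|_q\le d^{1/\log d}\|x\|_q=e\|x\|_q$. No step is a real obstacle; the only subtle point is the endpoint bookkeeping that ensures the floor in $\lfloor\log d/2\rfloor$ does not leave a gap larger than $1/\log d$ at the bottom of the grid, which is the one calculation I would do carefully.
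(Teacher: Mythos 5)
Your proposal is correct and follows essentially the same route as the paper: choose the largest index $i$ with $q_i\le q$, bound $1/q_i-1/q$ by $1/\log d$, and apply $\|x\|_{q_i}\le d^{1/q_i-1/q}\|x\|_q$ together with $d^{1/\log d}=e$ and monotonicity of $p$-norms. The only differences are cosmetic: the paper derives the comparison inequality $\|x\|_{q'}\le d^{1/q'-1/q}\|x\|_q$ from scratch via Lagrange multipliers where you cite it as classical, and you are somewhat more explicit about the endpoint case $i^*=\lfloor\log d/2\rfloor$, which the paper glosses over.
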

\begin{proof}
First, we claim that $\|x\|_{q'}\le d^{1/q' - 1/q}\|x\|_q$ for any $q'\le q$. To see this, observe that without loss of generality we may set $\|x\|^q_q=1$, and attempt to maximize $\|x\|_{q'}^{q'}$ subject to the constraint $\|x\|_q=1$. Then by application of LaGrange multipliers, we have
\begin{align*}
    q'x_i^{q'-1} = \lambda q x_i^{q-1}
\end{align*}
for all $i$. From this we see that any non-zero $x_i$s must all be equal to each other. Let $n$ be the number of non-zero $x_i$s, and let $z$ be their common value. Then we wish to maximize $nz^{q'}$ subject to $nz^q=1$. This yields $nz^q=n^{1-q'/q}$. This clearly grows with $n$, which can be at most $d$. Thus we see $\|x\|_{q'}^{q'} \le d^{1-q'/q}$, which implies $\|x\|_{q'}\le d^{1/q'-1/q}$ as desired.

Now we can move on to prove the Lemma. Let $i$ be the largest value such that $q_i\le q$. Then by the recursive definition of $q_i$, we must have $\frac{1}{q_i}-\frac{1}{q}\le \frac{1}{\log(d)}$ so that $\|x\|_{q_i}\le d^{1/\log(d)}\|x\|_q=e\|x\|_q$. Further, since $q_i\le q$, $p_i\ge p$ so that $\|x\|_{p_i}\le \|x\|_p$.
\end{proof}

The following Lemma is used to optimize $y$ in the proof of Theorem \ref{thm:opt}:

\begin{restatable}{Lemma}{thmbalancelogs}\label{thm:balancelogs}
Suppose $A$, $B$, $C$, $D$, $E$ are non-negative constants. Then
\begin{align*}
    &\inf_{y\ge 0}\left[y(A+B\log(e+Cy)) + \frac{D^2}{y} + E\sqrt{\log(e+Cy)}\right]\\
    &\quad\quad\le 2D\sqrt{[A+B\log(e+CD)]_1}+E\sqrt{\log(e+CD)}
\end{align*}
where $[X]_1 = \max(X,1)$.
\end{restatable}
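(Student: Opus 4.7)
My plan is to bound the infimum by exhibiting a single well-chosen value of $y$ and evaluating the objective there. This is an upper-bound statement, so no true optimization is required.

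The guiding intuition is to balance the two dominant terms $y(A+B\log(e+Cy))$ and $D^2/y$. If we pretend the logarithmic factor is a constant $L$, then $yL$ and $D^2/y$ are equal when $y=D/\sqrt{L}$, at which point each term equals $D\sqrt{L}$. So I would set
\[
L := [A+B\log(e+CD)]_1 = \max(A+B\log(e+CD),\,1), \qquad y^\star := \frac{D}{\sqrt{L}}.
\]
The role of the $[\cdot]_1$ truncation is precisely to guarantee $L\ge 1$, hence $y^\star\le D$, hence $Cy^\star\le CD$.

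Given this choice, the argument is a three-line bookkeeping check. First, because $Cy^\star\le CD$, we have $\log(e+Cy^\star)\le \log(e+CD)$, which immediately handles the $E$-term: $E\sqrt{\log(e+Cy^\star)}\le E\sqrt{\log(e+CD)}$. Second, using the same monotonicity and the defining property $L\ge A+B\log(e+CD)$,
\[
y^\star\bigl(A+B\log(e+Cy^\star)\bigr)\;\le\; \frac{D}{\sqrt{L}}\cdot L \;=\; D\sqrt{L}.
\]
Third, $D^2/y^\star = D\sqrt{L}$ by construction. Summing the three estimates yields
\[
\inf_{y\ge 0}[\,\cdots\,] \;\le\; 2D\sqrt{L} + E\sqrt{\log(e+CD)},
\]
which is exactly the claimed bound.

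There is no real obstacle: the only subtlety is recognizing that the truncation $[\cdot]_1$ in the statement is what makes the choice $y^\star=D/\sqrt{L}$ safely satisfy $y^\star\le D$, so that the self-referential logarithm $\log(e+Cy^\star)$ can be replaced by $\log(e+CD)$ without losing. Everything else is direct substitution, so I would present the proof as a single short display verifying the three bounds above.
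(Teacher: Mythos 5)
Your proposal is correct and is precisely the paper's argument: the paper also just guesses $y = D/\sqrt{[A+B\log(e+CD)]_1}$ and notes that the bound follows from monotonicity of $\log$ and $y\le D$; you have simply written out the three term-by-term checks that the paper leaves implicit.
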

\begin{proof}
We just guess a value for $y$:
\begin{align*}
    y = \frac{ D}{\sqrt{[A+B\log(e+CD)]_1}}
\end{align*}
Then the result follows from the fact that $\log$ is an increasing function and $y\le D$.
\end{proof}

This final Lemma is allows us to compute the modified hint values $\tilde h_t$ needed to convert our unconstrained optimistic algorithm into a constrained algorithm.
\begin{restatable}{Lemma}{thmprojgrad}\label{thm:projgrad}
Let $W$ be a convex domain in a Hilbert space. Let $x\notin W$. Then for any $t\in[0, S(x))$, we have $\delta \in S(x-t\delta)$ for all $\delta\in\partial S(x)$. Further, we have $a\delta \in \partial S(x-S(x)\delta)$ for all $a\in[0,1]$.
\end{restatable}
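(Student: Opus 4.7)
The plan is to exploit the structure of the distance function $S(w) = \inf_{w'\in W}\|w-w'\|$. Writing $\Pi$ for the Hilbert-space projection onto $W$, I will use two facts about $\partial S$ that both follow from the projection characterization $\langle x-\Pi(x),\ y-\Pi(x)\rangle\le 0$ for all $y\in W$: (i) if $w\notin W$, then $\partial S(w)=\{(w-\Pi(w))/S(w)\}$ is a singleton unit vector; and (ii) if $w\in W$, then $\partial S(w)=\{g:\|g\|\le 1\}\cap N_W(w)$, where $N_W(w)$ is the normal cone to $W$ at $w$. Fact (ii) is verified by taking arbitrary $g$ with $\|g\|\le 1$ lying in $N_W(w)$, decomposing $y-w=(y-\Pi(y))+(\Pi(y)-w)$ for arbitrary $y$, and bounding $\langle g,y-w\rangle \le \|g\|\,\|y-\Pi(y)\|+0 \le S(y)$ via Cauchy--Schwarz and the normal-cone inequality.

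For the first assertion, let $\delta\in\partial S(x)$ with $x\notin W$. Fact (i) gives $\delta=(x-\Pi(x))/S(x)$, and in particular $\|\delta\|=1$. For $t\in[0,S(x))$ I would show $x-t\delta$ is still outside $W$ with the same projection $\Pi(x-t\delta)=\Pi(x)$. Writing $x-t\delta=\Pi(x)+(S(x)-t)\delta$ and using that $\delta\in N_W(\Pi(x))$ by the projection characterization, one checks directly from the normal-cone inequality that $\Pi(x)$ minimizes $\|(x-t\delta)-y\|$ over $y\in W$. Fact (i) then yields $\partial S(x-t\delta)=\{(x-t\delta-\Pi(x))/(S(x)-t)\}=\{\delta\}$, which is what we want.

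For the second assertion, $x-S(x)\delta=\Pi(x)\in W$. By fact (ii) it suffices to verify two things: $\|a\delta\|\le 1$ (immediate since $\|\delta\|=1$ and $a\in[0,1]$) and $a\delta\in N_W(\Pi(x))$ (immediate since $\delta$ is a positive multiple of $x-\Pi(x)$, which lies in the normal cone at $\Pi(x)$ by the projection characterization, and normal cones are closed under non-negative scaling). Both together give $a\delta\in\partial S(\Pi(x))=\partial S(x-S(x)\delta)$.

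The main technical point is establishing fact (ii): at boundary points of $W$ neither first-order optimality nor the singleton formula applies, so the characterization of $\partial S$ as the unit ball intersected with $N_W(w)$ requires the decomposition estimate noted above. Once both subgradient characterizations are in hand, both parts of the lemma reduce to a direct computation using the projection identity.
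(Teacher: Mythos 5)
Your proof is correct, but it takes a more self-contained route than the paper's. The paper imports from prior work the facts that $S$ is $1$-Lipschitz and that $\partial S(x)=\{(x-\Pi(x))/\|x-\Pi(x)\|\}$ for $x\notin W$; for the first claim it then shows $\Pi(x-t\delta)=\Pi(x)$ by a Lipschitz sandwich, $S(x)-t\le S(x-t\delta)\le \|x-t\delta-\Pi(x)\|=S(x)-t$, rather than by verifying the projection variational inequality directly as you do --- these two computations are interchangeable and equally short. The genuine divergence is in the second claim: the paper observes that $0\in\partial S(x-S(x)\delta)$ because that point lies in $W$, and invokes convexity of the subdifferential to conclude $a\delta=a\delta+(1-a)\cdot 0\in\partial S(x-S(x)\delta)$. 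That argument implicitly needs $\delta\in\partial S(x-S(x)\delta)$, which is the endpoint $t=S(x)$ excluded from the first claim and strictly speaking requires a closedness-of-the-subdifferential limiting argument that the paper leaves unstated. Your fact (ii) --- that on $W$ the subdifferential is the unit ball intersected with the normal cone, proved via the decomposition $y-w=(y-\Pi(y))+(\Pi(y)-w)$ --- sidesteps this endpoint issue and gives $a\delta\in\partial S(\Pi(x))$ directly. The cost is that you must establish this characterization yourself, which the paper's convexity trick avoids; the benefit is an argument that is fully self-contained and airtight at the boundary of $W$.
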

\begin{proof}
First, observe that from Proposition 1 and Theorem 4 from \citep{cutkosky2018black} we have $S$ is 1-Lipschitz and $\partial S(x)=\left\{\frac{x-\Pi(x)}{\|x-\Pi(x)\|}\right\}$. Therefore $\delta = \frac{x-\Pi(x)}{\|x-\Pi(x)\|}$ and so $S(x)-t\le S(x-t\delta)\le \|x-t\delta - \Pi(x)\|=S(x)-t$, where the first inequality is from Lipschitzness and the last equality from definition of $\delta$. Therefore $\Pi(x)=\Pi(x-t\delta)$, and so the first part of the Lemma follows from Theorem 4 of \citep{cutkosky2018black}. For the second part, we observe that $0\in \partial S(x-S(x)\delta)$ because $x-S(x)\delta\in W$. Therefore $a\delta + (1-a)0 = a\delta$ in $\partial S(x-\delta)$, proving the second part of the Lemma.
\end{proof}

Lemma \ref{thm:htilde} is now an immediate corollary of Lemma \ref{thm:projgrad}.

\section{Proof of Theorem \ref{thm:constrained}}
We restate the Theorem below for reference:
\thmconstrained*
\begin{proof}
Define $\ell_t(w) = \frac{1}{2}\left(\langle g_t, w\rangle + \|g\|S(w)\right)$. Then by Lemma \ref{thm:htilde}, $z_t\in \partial S(\tilde w_t)$, so that $\tilde g_t=\frac{g_t}{2} + \frac{z_t\|g_t\|}{2}\in \partial \ell_t(\tilde w_t)$. Now we apply the definition of $\ell_t$, $\tilde w_t$ and Cauchy-Schwarz just as in \citep{cutkosky2018black} to obtain $\frac{1}{2}\langle g_t ,w_t -\w\rangle \le \ell_t(\tilde w_t) - \ell_t(\w)\le \langle \tilde g_t, \tilde w_t - \w\rangle$. Thus we may analyze the regret of the $\tilde w_t$s with respect to the $\tilde g_t$s. This is encouraging, because the $\tilde w_t$s are constructed using Algorithm \ref{alg:opt} on hints $\tilde h_t$ and gradients $\tilde g_t$.

Now we continue as in the proof of Theorem \ref{thm:opt}. We write the regret
\begin{align*}
    \frac{1}{2}R_T(\w)&\le \sum_{t=1}^T \langle \tilde g_t, \tilde w_t-\w\rangle\\
    &= \sum_{t=1}^T \langle \tilde g_t, x_t - \w\rangle - \langle \tilde g_t, \tilde h_t\rangle y_t\\
    &\le R^{\ol}_T(\w) + R^{\bol}_T(y) - \sum_{t=1}^T \langle \tilde g_t, \tilde h_t\rangle y
\end{align*}
Now again we can actually immediately see the second part of the Theorem by setting $y=0$ and observing $\|\tilde g_t\|\le \|g_t\|$ for all $t$. For the first part of the Theorem, by exactly the same argument as the proof of Theorem \ref{thm:opt} we have 

\begin{align*}
\frac{1}{2}R_T(\w)&\le B_T(u)\sqrt{\left[(2C+D^2)\log(e+B_T(u)T^c/\epsilon)+\sum_{t=1}^T \|\tilde h_t-\tilde g_t\|^2-\|\tilde h_t\|^2\right]_1}\\
    &\quad\quad\quad+DB_T(u)\sqrt{\log(e+B_T(u)T^c/\epsilon)} + A_T(u)+\epsilon
\end{align*}

Finally, observe that
\begin{align*}
    \|\tilde h_t - \tilde g_t\|&\le \left\|\frac{h_t-g_t}{2} - \frac{(\|h_t\|-\|g_t\|)z_t}{2}\right\|\\
    &\le \frac{\|h_t-g_t\|}{2} + \frac{|\|h_t\|-\|g_t\||}{2}\\
    &\le \|h_t-g_t\|
\end{align*}
where in the second line we observed that $\|z_t\|\le 1$ (because $S$ is 1-Lipschitz), and in the last line we observe that $|\|a\|-\|b\||\le \|a-b\|$ for all $a,b$ by triangle inequality. Putting this together we have
\begin{align*}
\frac{1}{2}R_T(\w)&\le B_T(u)\sqrt{\left[(2C+D^2)\log(e+B_T(u)T^c/\epsilon)+\sum_{t=1}^T \| h_t-g_t\|^2\right]_1}\\
    &\quad\quad\quad+DB_T(u)\sqrt{\log(e+B_T(u)T^c/\epsilon)} + A_T(u)+\epsilon
\end{align*}
as desired.
\end{proof}

\section{Proof of Theorem \ref{thm:optmanyhints}}

We restate the Theorem below for reference:
\thmoptmanyhints*

\begin{proof}
Just as in the proof of Theorem \ref{thm:opt}, we write the regret
\begin{align*}
    R_T(\w)&=\sum_{t=1}^T \langle g_t, w_t-\w\rangle\\
    &= \sum_{t=1}^T \langle g_t, x_t - \w\rangle - \sum_{i=1}^k \langle g_t, h_{t,i}\rangle y_{t,i}\\
    &\le R^{\ol}_T(\w) + \sum_{i=1}^kR^{\bol_i}_T(y_i) - \sum_{i=1}^k\sum_{t=1}^T \langle g_t, h_{t,i}\rangle y_i
\end{align*}
Now again we can actually immediately see the second part of the Theorem by setting $y_i=0$ for all $i$ and observing that $\sum_{i=1}^k R^{\bol_i}(0)\le \epsilon k$. Further, choose any particular index $i$. Then set $y_j=0$ for all $j\ne i$ and we have
\begin{align*}
    R_T(\w)&\le R^{\ol}_T(\w) + (k-1)\epsilon+R^{\bol_i}_T(y_i) - 
    \sum_{t=1}^T \langle g_t, h_{t,i}\rangle y_i
\end{align*}
Now the rest of the proof is identical to that of Theorem \ref{thm:opt}
\end{proof}

\end{document}